
\documentclass[letterpaper, 10 pt, conference]{ieeeconf}  

\IEEEoverridecommandlockouts                              

\overrideIEEEmargins                                      




\usepackage[english]{babel}


\usepackage{amsmath,amssymb}
\usepackage{mathtools}
\usepackage{graphicx}

\usepackage[colorlinks=true, allcolors=blue]{hyperref}
\usepackage{subcaption}

\usepackage{tikz}
\usetikzlibrary{automata,positioning}
\usepackage{acronym}

\usepackage{tabularx}
\usepackage{amsthm}
\newif\ifuseboldmathops
\newif\ifuseittextabbrevs
\useboldmathopstrue   

\ifuseittextabbrevs

	\newcommand{\ie}{{\it i.e.}}

\else

	\newcommand{\ie}{i.e.}

\fi

\ifuseboldmathops

\else

\fi

\ifuseboldmathops

\else

\fi

\ifuseboldmathops

\else

\fi

\ifuseboldmathops


\else

	
\fi


\newcommand{\argmax}{\mathop{\mathrm{argmax}}}







\newcommand{\abs}[1]{\lvert#1\rvert}

\newcommand{\calAP}{\mathcal{AP}}
\newcommand{\calPA}{\mathcal{A}}
\newcommand{\calG}{\mathcal{G}}

\newcommand{\calZ}{\mathcal{Z}}

\newcommand{\init}{{\iota}}


\acrodef{mdp}[MDP]{Markov Decision Process}
\acrodef{pomdp}[POMDP]{Partially Observable Markov Decision Process}
\acrodef{momdp}[MOMDP]{Multi-objective MDP}
\acrodef{ltl}[LTL]{Linear TtlemporTLal LoTeminating Labeled gic}
\acrodef{dfa}[DFA]{Deterministic Finite Automaton}
\acrodef{tlmdp}[TLMDP]{terminating labeled Markov decision process}
\acrodef{pdfa}[PDFA]{preference deterministic finite automaton}


\theoremstyle{definition}
 \newtheorem{definition}{Definition}
 \newtheorem{example}{Example}
\newtheorem{problem}{Problem}
\newtheorem{lemma}{Lemma}
\newtheorem{assumption}{Assumption}

\newtheorem{theorem}{Theorem}


\newcommand{\calA}{\mathcal{A}}

\newcommand{\augnodes}{\mathcal{W}}

\newcommand{\augedges}{\mathcal{E}}

\acrodef{smdp}[Semi-MDP]{Semi-Markov decision process}
\acrodef{mcts}[MCTS]{Monte Carlo tree search}
\acrodef{uct}[UCT]{Upper Confidence Bound 1 applied to trees}
\acrodef{scltl}[scLTL]{syntactically co-safe LTL}
\acrodef{ssp}[SSP]{Stochastic Shortest Path}
\acrodef{p2sg}[SG(2)]{Two-player Stochastic Game}
\acrodef{mc}[MC]{Markov chain}
\acrodef{prefltl}[TPL]{ Temporal Preference Logic}
\acrodef{tld}[TLwD]{Temporal Logic with Distributions}
\acrodef{mtl}[Metric TL]{Metric Temporal Logic}
\acrodef{sta}[STA]{Stochastic Timed Automaton}


\newcommand{\dist}{\mathcal{D}}

\renewcommand{\Pr}{\mathbf{Pr}}

\newcommand{\calM}{\mathcal{M}}
\newcommand{\calP}{\mathcal{P}}

\newcommand{\reach}[1]{\mathsf{reach}(#1)}

\acrodef{gpf}[GPF]{generalized preference formula}

\acrodef{cp}[CP]{ceteris paribus}
\acrodef{milp}[MILP]{Mixed-Integer Linear Programming}
\acrodef{dfa}[DFA]{Deterministic Finite Automaton}

\newcommand{\pdfa}{\mathcal{A}}

\newcommand{\prefvertices}{\mathbb{F}}

\newcommand{\prefedges}{E}

\newcommand{\Paths}{\operatorname{Paths}}

\newcommand{\trace}{\operatorname{trace}}

\newcommand{\PPwPOP}{{\rm PPwPOP}\xspace}

\newcommand*{\probleminternal}[4]{
	\par
	\medskip
	\noindent\fbox{\parbox{0.98\columnwidth}{
			\textbf{#4: #1} \\[0.05in]
			\renewcommand{\tabcolsep}{2pt}
			\begin{tabularx}{\linewidth}{rX}
				\emph{Input:} & #2 \\
				\emph{Output:} & #3
			\end{tabularx}
		}}
		\par
		\medskip
		\par
	}
	
\newcommand*{\problembox}[3]{\probleminternal{#1}{#2}{#3}{Problem}}

\acrodef{ltlf}[LTLf]{Linear temporal logic over finite words}
\acrodef{pltlf}[PLTLf]{Preference over linear temporal logic over finite words}

\usepackage{xcolor}

\usepackage{framed}
\usepackage{changes}
\definechangesauthor[name=Jie Fu, color=blue]{JF}
\usepackage{textcomp}
\usepackage{algorithmic}
\usepackage[ruled,vlined,noend,linesnumbered]{algorithm2e}
\usepackage[english]{babel}
\usepackage{enumerate}
\usepackage{mathrsfs}

\newcommand*{\gobble}[1]{}

\usepackage{cite}

\usepackage[framemethod=default,innerleftmargin=5pt,innerrightmargin=5pt,skipabove=5pt,topline=false,rightline=false,bottomline=false,linewidth=2.5pt,roundcorner=5pt]{mdframed}

\title{\LARGE \bf
 Probabilistic Planning with Partially Ordered Preferences over Temporal Goals
}

\author{Hazhar Rahmani, Abhishek N. Kulkarni, and Jie Fu  
\thanks{$^{1}$The authors are with the Department of Electrical and Computer Engineering,
        University of Florida,  Gainesville, FL 32605, USA.
        {\tt\small \{h.rahmani, a.kulkarni2, fujie\}@ufl.edu }%
        This material is based upon work supported by the Air Force Office of Scientific Research under award
number FA9550-21-1-0085 and in part by NSF under award number 2024802.
}
}

\begin{document}
	\maketitle
	
	\begin{abstract}
		In this paper, we study planning in stochastic systems, modeled as Markov decision processes (MDPs), with preferences over temporally extended goals.
		Prior work on temporal planning with preferences assumes that the user preferences form a total order, meaning that every
		pair of outcomes are comparable with each other.
        In this work, we consider the case where the preferences over possible outcomes are a partial order rather than a total order.
	We first introduce a variant of deterministic finite automaton, referred to as a preference DFA, for specifying the user's preferences over temporally extended goals.
	Based on the order theory, we translate the preference DFA to a preference relation over policies for probabilistic planning in a labeled MDP.   
    In this treatment, a most preferred policy induces a \emph{weak-stochastic nondominated} probability distribution over the finite paths in the MDP.
    The proposed planning algorithm hinges on the construction of a multi-objective MDP. 
    We prove that a weak-stochastic nondominated policy given the preference specification is Pareto-optimal in the constructed multi-objective MDP, and vice versa.
		Throughout the paper, we employ a running example to  demonstrate the proposed preference specification and solution approaches. We show  the efficacy of our algorithm using the example with detailed analysis, and then discuss possible future directions.	
	\end{abstract}

	\section{Introduction}
	\label{sec:intr}
	
	With the rise of artificial intelligence, robotics and autonomous systems are being designed to make complex decisions by reasoning about multiple goals at the same time. 
	Preference-based planning (PBP) allows the systems to decide which goals to satisfy when not all of them can be achieved \cite{hastie2010rational}. 
	Even though PBP has been studied since the early 1950's, most works on preference-based temporal planning (c.f. \cite{baier2008planning}) assume that all outcomes are pairwise comparable---that is, the preference relation is a \emph{total} order. 
	This assumption is strong and, in many cases, unrealistic\cite{aumann1962utility}. 
	In robotic applications, preferences may need to admit a \emph{partial} order because of (a) \emph{Inescapability}: An agent has to make decisions under time limits but with partial information about preferences because, for example, it lost communication with the server; and (b) \emph{Incommensurability}: Some situations, for instance, comparing the quality of an apple to that of banana, are fundamentally incomparable since they lack a standard basis to compare. These situations motivate the need for a planner that deals with partial order preferences in the presence of all uncertainties in its environment.
	
	\begin{figure}[h]
		\centering
		\includegraphics[width=1.0\linewidth]{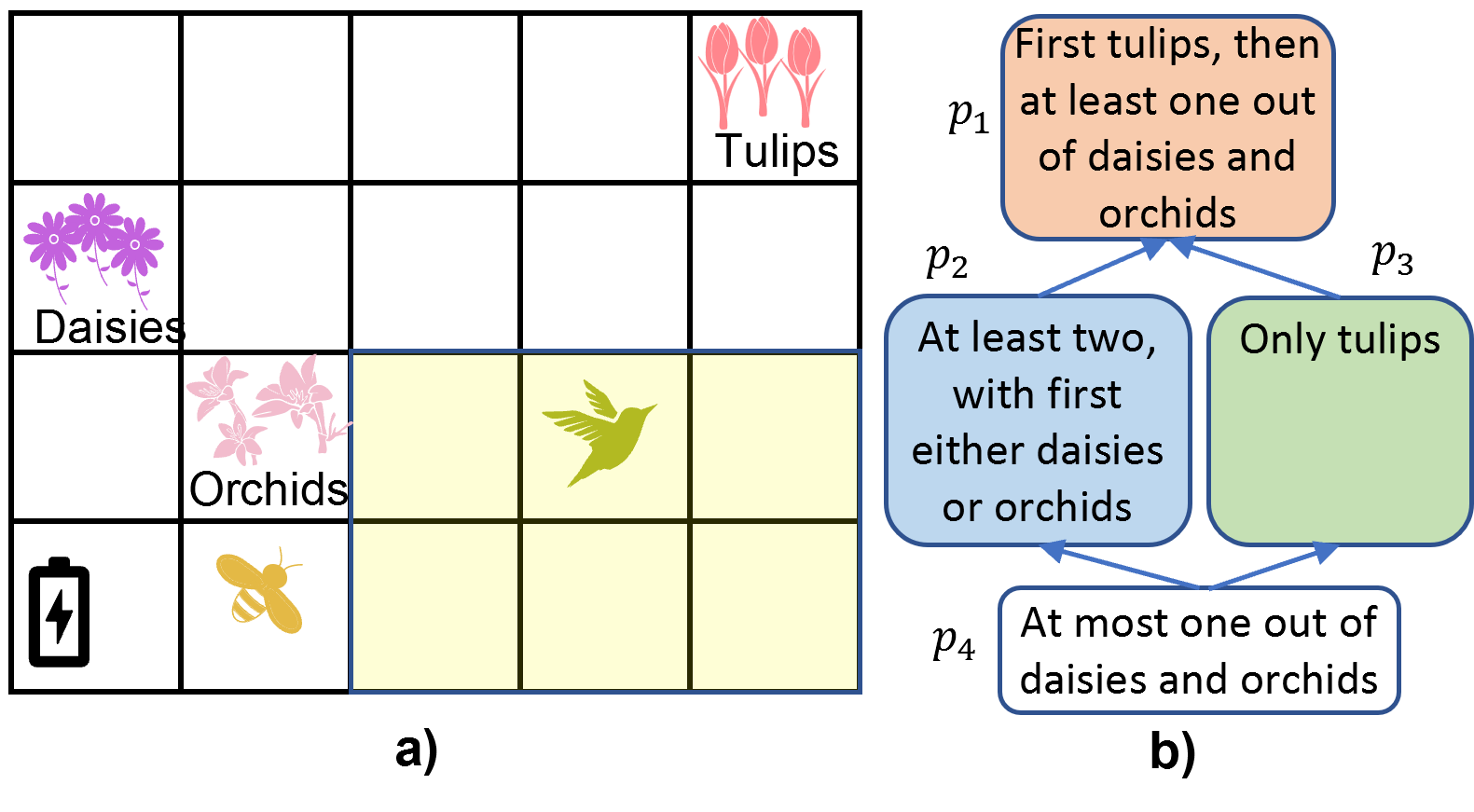}
		\caption{
		 \textbf{a)} Bob's Garden.
		 \textbf{b)} Bob's preferences on how the bee robot should perform the task of pollinating the flowers.
		}
		\label{fig:gap_garden}
	\end{figure}
	
	As a motivation example, consider Figure~\ref{fig:gap_garden}, which shows a garden that belongs to Bob. He grows three kinds of flowers: Tulips, daisies, and orchids. To pollinate the flowers, he uses a bee robot with limited battery. The environment is uncertain due to the presence of another agent (bird), the weather, and the robot dynamics.
    
    Bob has a preference for how the robot should achieve the task of pollination. Compared to the other types, tulips have a shorter life span, so Bob considers four outcomes 
    \begin{itemize}
        \item[($p_1$)] pollinate tulips first, then at least one other flower type; 
        \item[($p_2$)] pollinate two types of flowers, with the first being either daisies or orchids; 
        \item[($p_3$)] pollinate only tulips; and
        \item[($p_4$)] at most one out of daisies and orchids is pollinated,
    \end{itemize}
    where the preference relation among them is shown in Figure~\ref{fig:gap_garden}b using a preference graph, where the nodes represent the outcomes, and each directed edge is an improving flip \cite{santhanamRepresentingReasoningQualitative2016}. Thus, $p_1$ is the most preferred and $p_4$ is the least preferred outcome, while $p_2$ and $p_3$ are incomparable with each other. 
    %
    %
    As the robot has a limited battery life and the system is stochastic, it might not achieve the most preferred outcome with probability one. 
     Incomparable outcomes also introduce incomparable policies.   

    
   Preference-based planning problems over temporal goals have been well-studied for deterministic systems given both total and partial preferences (see \cite{baier2008planning} for a survey). For preferences over temporal goals in deterministic systems, several works \cite{tumova2013least, wongpiromsarn2021, rahmani2020what} proposed minimum-violation planning methods that decide which low-priority constraints should be violated. 
   Mehdipour \emph{et al.}~\cite{mehdipourSpecifyingUserPreferences2021} associate weights with Boolean and temporal operators in signal temporal logic to specify the importance of satisfying the sub-formula and priority in the timing of satisfaction. This reduces the PBP problem to that of maximizing the weighted satisfaction in deterministic dynamical systems. 
    However, the solutions to PBP problem for deterministic systems cannot be applied to stochastic systems. 
    This is because in stochastic systems, even a deterministic policy yields a distribution over outcomes. Hence, to determine a better policy, we need comparison of distributions---a task a deterministic planner cannot do.
    
	Several works have studied the PBP problem for stochastic systems.
	Lahijanian and Kwiatkowska~\cite{Lahijanian2016} considered the problem of revising a given specification to improve the probability of satisfaction of the specification. They
	formulated the problem as a multi-objective \ac{mdp} problem that trades off minimizing the cost of revision and maximizing the probability of satisfying the revised formula. 
    Cai \emph{et al.}~\cite{cai2021optimal} consider planning with infeasible LTL specifications in systems modeled by probabilistic MDPs. Their problem's aim is to synthesize a policy that in decreasing order of importance 1) provides a desired guarantee to satisfy the task, 2) satisfies the specifications as much as possible, and 3) minimizes the implementation cost of the plan.
	Li \emph{et al.}~\cite{li2020probabilistic} solve a preference-based probabilistic planning problem by reducing it to a multi-objective model checking problem. 
	However, all these works assume the preference relation to be \emph{total}. 
	To the best of our knowledge, \cite{fu2021probabilistic} is the only work that studies the problem of probabilistic planning with incomplete preferences. 
	The authors introduce the notion of the value of preference satisfaction for planning within a pre-defined finite time duration and developed a mixed-integer linear program to maximize the satisfaction value for a subset of preference relations. In comparison, our work resorts to the notion of stochastic ordering to compare policies in the stochastic system with respect to the partial order of temporal goals and allows the time horizon to be finite, but unbounded.

 
	Our contributions in this paper are three-fold: (1) We introduce a new computational model called a \emph{Preference Deterministic Finite Automaton (PDFA)}. A PDFA models a user's (possibly partial) preferences over temporally extended goals; (2) We identify the connection between the probabilistic PBP problem and stochastic orders \cite{masseyStochasticOrderingsMarkov1987}. This allows us to reduce the problem of probabilistic planning with partial preferences over temporal goals to that of finding the set of weak-stochastic nondominated policies in a product of \ac{mdp} and the PDFA. (3) We employ the property of weak-stochastic nondominated policies to design multiple objectives in the product \ac{mdp} and prove that a Pareto-optimal policy in the resulting multi-objective product \ac{mdp} is weak-stochastic nondominated respecting the preference relation. Thus,  
	the set of weak-stochastic nondominated policies can, then, be computed using any off-the-shelf solver that computes Pareto optimal policies in polynomial time.

	\section{Preliminaries and Problem Formulation}
	\label{sec:def}
	%
	\textbf{Notations} The set of all finite words over a finite alphabet $\Sigma$ is denoted $\Sigma^\ast$. The empty string $\Sigma^0$ is denoted as $\epsilon$. The set of all probability distributions over a finite set $X$ is denoted $\dist(X)$.
    Given a distribution $\mathbf{d}\in \dist(X)$, the probability of an outcome $x\in X$ is denoted $\mathbf{d}(x)$. %

	\subsection{The System and its Policy}
	%
	
    We model the system using a variant of MDP.
	
	\begin{definition}
		\label{def:labeled_mdp}
		A \ac{tlmdp}, or a terminating MDP for short, is a tuple  $M = \langle S, A:=\bigcup_{s \in S} A_s, \mathbf{P}, s_0, s_\bot, \calAP, L \rangle$ in which 
		$S$ is a finite set of states;
		$A$ is a finite set of actions, where for each state $s \in S$, $A_s$ is the set of available actions at $s$;
		$\mathbf{P}: S \times A   \rightarrow \dist(S)$ is the probabilistic transition  function, where for each $s, s' \in S$ and $a \in A$, $\mathbf{P}(s, a, s')$ is the probability that the MDP transitions to $s'$ after taking action $a$ at $s$; 
		$s_0 \in S$ is the initial state;
		%
		%
		$s_\bot \in S$ is the \emph{termination state}, which is a unique \emph{sink} state and $A_{s_\bot} = \emptyset$; 
		$\calAP$ is a finite set of atomic propositions; and
		$L: S \rightarrow 2^{\calAP} \cup \{\epsilon\}$ is a labeling function that assigns to each state $s\in S \setminus \{s_\bot\}$, the set of atomic propositions $L(s) \subseteq \calAP $ that hold in $s$. 
		Only the terminating state is labeled the empty string, i.e., $L(s)=\epsilon$ iff $s=s_\bot$.
	\end{definition}
	%
	

	Though this definition assumes a single sink state, we do not lose generality, as one can always convert any MDP with more than one sink state into an equivalent MDP that has only a single sink state by redirecting proper transitions to that sink state. \gobble{,by keeping one of those states to be the termination state and adding transitions for $a_\bot$ from all other sink states to that termination state.}
	
	The robot's interaction with the environment in a finite number $k$  of steps produces an \emph{execution} $\varrho = s_0 a_0 s_1 a_1 \cdots s_{k-1} a_{k-1} s_{k}$, where $s_0$ is the initial state and at each step $0 \leq i \leq k$, the system is at state $s_i$, the robot performs $a_i \in A_{s_i}$, and then the system transitions to state $s_{i+1}$, picked randomly based on $\mathbf{P}$ among those states for which $\mathbf{P}(s_i, a_i, .) > 0$.
	This execution produces a \emph{path} defined as $\rho=s_0 s_1 \cdots s_k \in S^*$,
	and the \emph{trace} of this path is defined as the finite word $\trace(\rho)=L(s_0) L(s_1) L(s_2) \cdots L(s_k)  \in (2^\calAP)^*$.
	%
	%
	Path $\rho$ is called \emph{terminating} if $s_k = s_\bot$.
	The set of all terminating paths in $M$ is denoted $\Paths_{\bot}(M)$.
	
	
 	\gobble{Note that we might forbid the robot to terminate its execution in a certain state, by not allowing that state to have an action that reaches the termination state with some positive probability.}

	%
  A policy for $M$ is a function $\pi: \mathscr{D} \rightarrow \mathscr{C}$ where it is called \emph{memoryless} if $\mathscr{D}=S$; \emph{finite-memory} if $\mathscr{D}=S^*$; \emph{deterministic} if $\mathscr{C}=A$, and \emph{randomized} if $\mathscr{C}=\dist(A)$.
	
	%

	%
In a terminating \ac{mdp}, a policy is \emph{proper} if it guarantees that the termination state $s_\bot$ will be reached with probability one \cite{bertsekas1991analysis}. 	The set of all randomized, finite-memory, proper polices for $M$ is denoted $\Pi_{prop}^M$. We  are  only interested in  finite traces for which a preference relation is defined. Thus, we only consider proper policies.
	\begin{assumption}
	  We assume all the policies for the MDP are proper.
	\end{assumption}
	In this paper, we consider only the MDPs for which all the policies are proper.
	We consider applications where the robot finishes its execution in a finite time, and in fact, in many robotics application, the robot has a battery limit or a limited lifespan and cannot execute forever.
	\subsection{Rank the policies}

	We introduce a computational model that captures the user's preference over different temporal goals. \gobble{the robot can achieve in the environment.}
	%
	\begin{definition}
		\label{def:model_preference}
		Given a countable set $U$, a \emph{preference model} for $U$, denoted $\succeq^U$ 
		is a partial order over the elements of $U$.
	\end{definition}
	We simply use  $\succeq$ for $\succeq^U$ if its meaning is clear
		from the context.
     Given $u_1, u_2 \in U$, we write $u_1 \succeq u_2$ if $u_1$ is \emph{weakly preferred to} (\ie,  is at least as good as) $u_2$; and $u_1\sim u_2$ if $u_1\succeq u_2$ and $u_2\succeq u_1$,
    that is,  $u_1$ and $u_2$ are \emph{indifferent}.
    We write
    $u_1 \succ u_2$ to mean that $u_1$ is \emph{strictly preferred} to $u_2$, \ie, $u_1\succeq u_2$ and $u_1\not \sim u_2$.  We write
	$u_1 \nparallel u_2$ if $u_1$ and $u_2$ are \emph{incomparable}.
		\begin{definition}\cite{masseyStochasticOrderingsMarkov1987}. Given a countable set $U$ partially ordered by a preference model $\succeq$, the \emph{weak-stochastic ordering} for $U$ is denoted $\mathfrak{E}_{wk}(U)$ and is defined as the family of subsets 
		   \begin{equation}
		       \mathfrak{E}_{wk}(U) = \{ \{x\}^\uparrow \mid x \in U \} \cup \{U, \emptyset\}.
		   \end{equation}
		   where $\{x\}^\uparrow =\{y \mid y \succeq x \}$ contains all elements in $U$ that are at least as good as $x$, according to the partial order $\succeq$.
		\end{definition}
        
        The weak-stochastic ordering for $U$ allows us to rank different probability measures on $U$.
        Given two probability measures $P_1$ and $P_2$ on $U$, we say $P_1$ \emph{weak-stochastic dominates} $P_2$ under $\succeq$, denoted $P_1 >_{\mathfrak{E}_{wk}} P_2$,
        if $P_1[X] \geq P_2[X]$ for each $X \in \mathfrak{E}_{wk}(U)$ and $P_1[Y] > P_2[Y]$ for some $Y \in \mathfrak{E}_{wk}(U)$. Intuitively, for any  outcome  $x$ in $U$, the probability of getting an outcome (weakly) preferred to $x$ in $P_1$ is at least as good as that in $P_2$, and for some outcome $x'\in U$, the probability of getting an outcome preferred to $x'$ in $P_1$ is higher than that in $P_2$.
        %
        
        To illustrate, consider the following example.
     \begin{example}
	 	Let $U = \{a,b,c,d\}$ and  $\succeq = \{(a, b), (b, d), (c, d), (a, c), (a, d) \}$, where $(x,y)\in \succeq$ if and only if $x\succeq y$.
	 	We have
	 		\[
	 	\mathfrak{E}_{wk}(U) = \{ \{a\}, \{a,b\}, \{a,c\}, \{a,b,c,d\}, \emptyset\}.
	 	\]
	 	Now consider three probability measures $P_1$, $P_2$, and $P_3$ where $P_1(a)=P_1(b)= 0.5$, $P_2(a) =P_2(c) = 0.5$, and $P_3(a) =P_3(d) = 0.5$.   
	 	Accordingly, 
	 	\[
	 	[P_1[X]]_{X\in	\mathfrak{E}_{wk}(U) }= [0.5, 1, 0.5, 1, 0],
	 	\]
	 	\vspace{-18pt}
	 	\[
	    [P_2[X]]_{X\in	\mathfrak{E}_{wk}(U) }= [0.5, 0.5, 1, 1, 0], \text{and}
	 	\] 
	 	\vspace{-18pt}
	 	\[
	 	[P_3[X]]_{X\in 	\mathfrak{E}_{wk}(U) }   = [0.5, 0.5, 0.5, 1, 0].
	 	\]
	 	Therefore, $P_1 >_{\mathfrak{E}_{wk}} P_3$, $P_2 >_{\mathfrak{E}_{wk}} P_3$. None of $P_1$ and $P_2$ weak-stochastic dominates the other one.
    \end{example}

	In this context, 
	the user preference over temporal goals is a preference model for $U=\Sigma^*$ where $\Sigma= 2^\calAP$. Based on the ranking of probability measures induced by the weak-stochastic ordering for $\Sigma^\ast$, we can rank the proper policies $\Pi^M_{prop}$ in the \ac{tlmdp} as follows.
	 
	Note that a proper policy $\pi: S^\ast \rightarrow \dist(A)$ produces a distribution 
	over the set of all terminating paths in the MDP $M$ such that for each terminating path $\rho \in \Paths_{\bot}(M)$,
	$\Pr^\pi(\rho)$
	is the probability of generating $\rho$ when the robot uses policy $\pi$.
	    Each terminating path $\rho$ is mapped to a single word in $\Sigma^*$, namely $\trace(\rho)$, 
		and therefore, $\pi$ yields a distribution 
		over the set of all finite words over $\Sigma$\gobble{---the alphabet of the preference DFA}
		such that for each word $w \in \Sigma^\ast$, $\Pr^\pi(w)$ is the probability that 
		$\pi$ produces $w$.
		%
		
		
	

%
	
  %
%
%
%
\begin{definition}
\label{def:weak_dominating_policies}
         Given two proper policies $\pi,\pi'$ in the terminating labeled \ac{mdp} $M$, $\pi$ \emph{weak-stochastic dominates} $\pi'$, denoted $\pi >_{\mathfrak{E}_{wk}} \pi'$, if for each $w \in \Sigma^\ast$, it holds
         that $\Pr^{\pi}(\{w\}^\uparrow) \ge \Pr^{\pi'}(\{w\}^\uparrow)$, and there exists a word $w' \in \Sigma^\ast$ such that
         $\Pr^{\pi}(\{w'\}^\uparrow) > \Pr^{\pi'}(\{w'\}^\uparrow)$.
\end{definition}

%

This definition is used to introduce the following notion.
	\begin{definition}
		\label{def:sto_dominance}
		   A proper policy $\pi \in \Pi_{prop}^M$ is \emph{weak-stochastic nondominated} if there \emph{does not exist} any policy $\pi' \in \Pi_{prop}^M$ such that $\pi' >_{\mathfrak{E}_{wk}}\pi$.
	\end{definition}
	
Informally, we say a policy $\pi$ is \emph{preferred}, if and only if it is weak-stochastic nondominated in $\Pi_{prop}^M$.
    
	Next, we state our problem informally.
	\begin{problem}
	    Given a terminating labeled \ac{mdp} and a preference model $\succeq$ over finite words $\Sigma^\ast$, compute a proper policy that is weak-stochastic nondominated.
	\end{problem}
		%
	
		\section{Main results}
		
		%
	    \subsection{Preference Deterministic Finite Automaton}
		%
	%
	 In this section, we propose a finite automaton to compatibly represent the user preferences over temporal goals. 
	 %
	%
	%
	%
	\begin{definition}
		\label{def:pdfa}
		A \ac{pdfa} for an alphabet $\Sigma$ is a tuple $\pdfa= \langle Q, \Sigma, \delta, \init, G:=(\mathbb{F}, \prefedges) \rangle$
		in which $Q$ is a finite set of states;
		$\Sigma$ is the alphabet;
		$\delta: Q \times \Sigma \rightarrow Q$ is the transition function;
		$\init \in Q$ is the initial state; 
		%
		and $G=(\prefvertices, \prefedges)$ is a \emph{preference graph} in which,
		$\prefvertices = \lbrace F_1, F_2, \cdots, F_m \rbrace$ is a partition of $Q$---i.e., 
		    $F \subseteq Q$ for each $F \in \prefvertices$, $F \cap F' = \emptyset$ for each distinct state subsets
		     $F, F' \in \prefvertices$, and $\bigcup_{F \in \prefvertices} F = Q$; and $\prefedges \subseteq \prefvertices \times \prefvertices$ is a set of directed edges.
		%
		\end{definition}  
		With a slight abuse of notation, we define the extended transition function $\delta: Q \times \Sigma^\ast \rightarrow Q$ in the usual way, \ie, $\delta(q,\sigma w) = \delta(\delta(q,\sigma), w)$ for $w\in \Sigma^\ast$ and $\sigma \in \Sigma$, and $\delta(q, \epsilon)=q$.
		%
		Note that Definition~\ref{def:pdfa} augments the classical deterministic finite automaton~\cite{hopcroft2001introduction} with the preference graph $G$, instead of a set of accepting (final) states. 
		%
	    
		
		%
		%

		For two vertices $F, F' \in \prefvertices$, we write $F  \rightsquigarrow F'$ to denote $F'$ is \emph{reachable} from $F$.
		By convention, each vertex $F$ of $G$ is reachable from itself. That is, $F\rightsquigarrow F$ always holds.

		The \ac{pdfa} encodes a preference model $\succeq$ for $\Sigma^*=(2^\calAP)^*$ as follows. 
		Consider two words $w, w' \in \Sigma^*$.
		Let $F, F' \in \prefvertices$ be the two state subsets such that $\delta(q, w) \in F $ and $\delta(q, w') \in F'$ (recall that $\prefvertices$ is a partitioning of $Q$);
		There are four cases: (1) if $F  = F'$, then $w \sim w'$; (2) if $F  \neq F'$ and $F' \rightsquigarrow F $, then $w \succ w'$; (3) if $F  \neq F' $ and $F \rightsquigarrow F'$, then $w' \succ w$; and (4) otherwise, $w \nparallel w'$.
		
		%
		
	    To illustrate, see Figure~\ref{fig:pdfa}, which shows a preference DFA specifying the preferences in the example of Figure~\ref{fig:gap_garden}.
	    State subsets $F_1 = \{q_2\}$, $F_2=\{q_4\}$, $F_3 = \{q_1\}$, and $F_4 = \{q_0, q_3, q_5\}$ respectively represent preferences $p_1$ through $p_4$.
	    %

	\gobble{For a state subset  $F_1 \in \prefvertices$, all the finite words that reach those states within $F_1$ are indifferent to one another, and for each pair of state subsets $F_1, F_2 \in \prefvertices$ for which there is a path from $F_2$ to $F_1$ in $\calG$, all words reaching a state within $F_1$ are preferred over any word reaching a state within $F_2$. If there is no path between two distinct state subsets $F_1$ and $F_2$, than any word that reaches a state within $F_1$ is incomparable to any word that reaches a state within $F_2$.}

			\begin{figure}[h]
		\centering
		\includegraphics[width=1.0\linewidth]{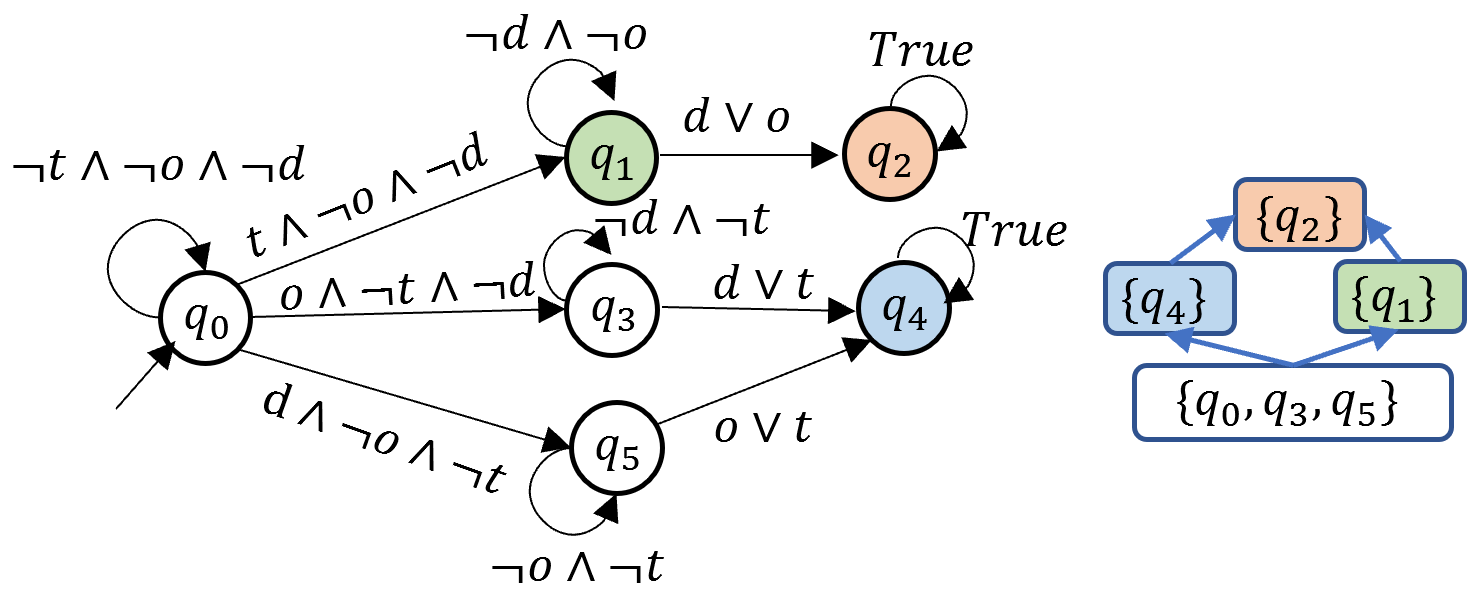}
		\caption{
		 PDFA for the example in Figure~\ref{fig:gap_garden}. 
		 \textbf{Left)} The DFA structure of PDFA.
		 \textbf{Right)} The preference graph of PDFA.
		}
		\label{fig:pdfa}
	\end{figure}
		
	
	The following Lemma allows us to define the weak-stochastic ordering over $\Sigma^\ast$, defined by the \ac{pdfa}, using its preference graph. 
	
	\begin{lemma}
	\label{lem:wUp}
	For each word $w\in \Sigma^\ast$, if $\delta(\init, w)\in F$ for some $F\in \mathbb{F}$, then 
	 \begin{multline} 
	 	\{ w\}^\uparrow = \{w' \in \Sigma^\ast \mid \exists F' \in \mathbb{F},\\ \delta(\init, w')  \in F' \text{ and } F\rightsquigarrow F'\}
	\end{multline}
\end{lemma}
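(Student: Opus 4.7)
My plan is to prove the set equality by mutual inclusion, relying entirely on unpacking the definition of $\succeq$ induced by the \ac{pdfa} together with the convention that $F \rightsquigarrow F$ for every $F \in \mathbb{F}$. Throughout, I would use the fact that $\mathbb{F}$ partitions $Q$ and $\delta$ is deterministic, so for any word $u \in \Sigma^\ast$ the unique vertex of $G$ containing $\delta(\init, u)$ is well-defined.

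For the inclusion $\{w\}^\uparrow \subseteq \{w' \in \Sigma^\ast \mid \exists F' \in \mathbb{F},\, \delta(\init, w') \in F' \text{ and } F \rightsquigarrow F'\}$, I would pick an arbitrary $w' \in \{w\}^\uparrow$, so $w' \succeq w$, and let $F' \in \mathbb{F}$ be the unique vertex with $\delta(\init, w') \in F'$. I then appeal to the four-case definition of $\succeq$ given just before the lemma: the cases $F = F'$ and $F \ne F'$ with $F \rightsquigarrow F'$ correspond to $w' \sim w$ and $w' \succ w$ respectively; the remaining case $F' \rightsquigarrow F$ with $F \ne F'$ would give $w \succ w'$, contradicting $w' \succeq w$, and the ``otherwise'' case would give $w \nparallel w'$, also contradicting $w' \succeq w$. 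In the two admissible cases, $F \rightsquigarrow F'$ holds (using reflexivity of $\rightsquigarrow$ for the indifference subcase), which is what we need.

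For the reverse inclusion, I would start from $w'$ with $\delta(\init, w') \in F'$ and $F \rightsquigarrow F'$, and split into two subcases. If $F = F'$, the encoding gives $w' \sim w$, so $w' \succeq w$. If $F \ne F'$, then $F \rightsquigarrow F'$ puts us in case (3) of the encoding, yielding $w' \succ w$, hence $w' \succeq w$. Either way, $w' \in \{w\}^\uparrow$.

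There is no real mathematical obstacle here; the claim is essentially a restatement of the \ac{pdfa}'s preference encoding in the language of upward closures. The only subtlety, and the thing to be careful about, is correctly matching the direction of edges in the preference graph to the direction of the preference relation and invoking reflexivity of $\rightsquigarrow$ so that the $w' \sim w$ case is absorbed into the ``$F \rightsquigarrow F'$'' condition; getting that bookkeeping right is the entirety of the work.
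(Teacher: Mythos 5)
Your proof is correct and is exactly the direct unpacking that the paper has in mind: the paper omits the proof, stating only that the lemma ``directly follows from the transition function in the PDFA and the transitivity property of the preference relation,'' and your two-inclusion case analysis on the four-case encoding (with reflexivity of $\rightsquigarrow$ absorbing the $F=F'$ subcase) is precisely that direct argument, with the edge directions matched correctly.
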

The lemma directly follows from the transition function in $\pdfa$ and the transitivity property of the preference relation and thus 
the proof is omitted. 

            \problembox{Probabilistic Planning with Partially Ordered Preferences (\PPwPOP)}
{A \ac{tlmdp} $M = \langle S, A:=\Sigma_{s \in S}A_s, \mathbf{P}, \allowbreak s_0, s_\bot, \calAP, L \rangle$ and a \ac{pdfa} $\calA= \langle Q, 2^\calAP, \delta, \allowbreak \init, G := (\mathbb{F}, E) \rangle$.}
{The set of all proper polices for $M$ that are weak-stochastic nondominated under the preferences specified by $\calPA$.
}

	\section{Synthesizing a preferred policy}
	\label{sec:alg}
	We now present our algorithm\gobble{ to solve \PPwPOP}.
The first step is to augment the planning state space with the state of the \ac{pdfa}. 
With this augmented state space, we can relate the preferences over traces in the \ac{mdp} to a preference over subsets of terminating states in a product \ac{mdp} we define as follows.
	\begin{definition}[Product MDP]
	\label{def:prod}
	Let $M = \langle S, A:=\Sigma_{s \in S}A_s, \mathbf{P}, s_0, s_\bot, \calAP, L \rangle$ and 
	 $\pdfa = \langle Q, \Sigma, \delta, \init, G:=(\mathbb{F}, \prefedges) \rangle$ be respectively the \ac{tlmdp} and the \ac{pdfa}. The product of $M$ and $\pdfa$ is a tuple 	$\calM = (X, A:=\bigcup_{x \in X}A_x, \mathbf{T}, x_0, X_G, \calG := (\augnodes, \augedges))$ in which
	\begin{enumerate}
	    \item $X =  S \times Q$ is the state space;
	    \item $A$ is the action space, where for each $x=(s, q) \in X$, $A_x = A_s$ is the set of available actions at state $x$;
	    \item \label{itm:T} $\mathbf{T}: X \times A \times X \rightarrow [0, 1]$ is the transition function such that for each state $(s, q) \in X$, action $a \in A$, and state $(s', q') \in X$;
	    \begin{multline*}
     \mathbf{T}((s, q), a, (s', q')) = \\  {
    \begin{cases}
        \mathbf{P}(s, a, s') & \mbox{\text{if} $q' = \delta(q, L(s')),$} \qquad\hfill \\
        0 & \text{otherwise};
    \end{cases}
    }
\end{multline*}
	    \item $x_0 = (s_0, \delta(\init, L(s_0)))$ is the initial state;
	    \item $X_G =  \{s_\bot \} \times Q$ is the set of terminating states;
	    \item \label{itm:pgraph} $\calG = (\augnodes, \augedges)$ is the preference graph, in which, letting $W_i = \{s_\bot\}\times F_i$ for each $F_i \in \mathbb{F}$, 
	    \begin{itemize}
	        \item $\augnodes = \{W_i\mid i=1,\ldots, \abs{\mathbb{F}}\}$ is the vertex set of the graph, and
	        \item $\augedges$ is the edge set of the graph such that $(W_i, W_j)\in \augedges$ if and only if $(F_i, F_j ) \in E$.
	    \end{itemize}
	    \end{enumerate}
	\end{definition}
	%
	%
	The preference graph of this MDP has been directly lifted from the one defined for the \ac{pdfa}. We use $W \rightsquigarrow W'$ to denote 
	 that $W'$ is reachable from $W$ in the preference graph $\calG$. Again, every $W$ is reachable from itself.

    Continuing with the example in Figure~\ref{fig:pdfa}, we have $W_1 = \{ s_\bot\} \times F_1 = \{(s_\bot, q_2)\}$, $W_2 =\{s_\bot\} \times  F_2=\{(s_\bot, q_4)\}$, $W_3 = \{s_\bot \} \times F_3 = \{(s_\bot, q_1)\}$, and $W_4 = \{s_\bot\} \times F_4 = \{(s_\bot, q_0), (s_\bot, q_3), (s_\bot, q_5)\}$.

      	\normalcolor

	   
    

    %

    
    
    %
   Next, we show how to compute a weak-stochastic nondominated policy, in the sense of Definition~\ref{def:sto_dominance}, through solving a multi-objective \ac{mdp}. 
    
     Given the product MDP $\calM$ constructed  
		   in Definition~\ref{def:prod}, the \emph{weak-stochastic ordering} for
		   $\augnodes$, denoted $\mathfrak{E}_{wk}(\augnodes)$, is the family of subsets 
		\begin{equation}
	   	    \mathfrak{E}_{wk}(\augnodes) =  \{\{W\}^\uparrow \mid W\in \augnodes \}  \cup \{ \emptyset, \augnodes \}
	   \end{equation}
        where $\{W\}^\uparrow = \bigcup_{W'\in \augnodes, W\rightsquigarrow W'} \{W'\}$.	
    %

Note that by construction, the number of subsets
in $\mathfrak{E}_{wk}(\augnodes)$ minus the empty set and the set $\augnodes$ is exactly the size of $\augnodes$.   
Let $N = \abs{\augnodes}$. 
    %
    \begin{definition}[\ac{momdp}]
	\label{def:goalMDP}
	The multi-objective MDP (MOMDP) associated with the product MDP $\calM = \langle X, A, \mathbf{T}, x_0, X_G, \calG := (\augnodes, \augedges) \rangle$ in Definition~\ref{def:prod} is a tuple $\calP = \langle X, A:=\bigcup_{x \in X}A_x, \mathbf{T}, x_0, X_G, \calZ=\{Z_1, Z_2, \cdots, Z_N \} \rangle$ in which $X$, $A$, $\mathbf{T}$, $x_0$, and $X_G$ are  the same elements in $\calM$ and for each $i \in \{1, \cdots, N\}$, $Z_i = \bigcup_{W \in \{W_i\}^\uparrow} W$.
	The $i$-th objective in the \ac{momdp} is to maximize the probability for reaching the set $Z_i$.
	\end{definition}

    Note that each $Z_i$ is a subset of goal states $X_G$, and that
    the intersection of two distinct goal subsets $Z_i$ and $Z_j$ may not be empty.
    
    Using the 
running  example in Figure~\ref{fig:pdfa}, we have $\{W_1\}^\uparrow = \{W_1\}$, $\{W_2\}^\uparrow = \{W_1, W_2\}$,
    $\{W_3\}^\uparrow = \{W_1, W_3\}$, and $\{W_4\}^\uparrow = \{W_1, W_2, W_3, W_4\}$; $Z_1= W_1$, $Z_2= W_1\cup W_2$, $Z_3= W_1\cup W_3$, and $Z_4 = \bigcup_{i=1}^4 W_i$.
    
    %
    %
    %
    
    %
    
    In this \ac{momdp}, for a given randomized, finite-memory policy $\mu: X^* \rightarrow \dist(A)$, we can compute the value vector of $\mu$ as a $N$-dimensional vector $\mathbf{V}^\mu=[\mathbf{V}_1^\mu, \mathbf{V}_2^\mu, \cdots, \mathbf{V}_N^\mu]$ where for each $i$, $\mathbf{V}_i^\mu$ is the 
    probability of reaching states of $Z_i$ by following policy $\mu$, starting from the initial state.

    Given a randomized, memoryless policy $\mu: X \rightarrow \dist(A)$, to compute its value vector $\mathbf{V}^\mu$, we first set for each goal state $x_g \in X_G$, $\mathbf{V}^\mu(x_g)$ to be the vector such that for each $i \in \{1, \cdots, n \}$, $\mathbf{V}^\mu_i(x_g) = 1$ if $x_g \in Z_i$, and otherwise $\mathbf{V}^\mu_i(x_g) = 0$. Then we compute the values of the non-goals states $x \in X \setminus X_G$ via the Bellman recurrence
    \begin{equation}
        \mathbf{V}^{\mu}(x) = \sum_{a \in A} \left ( \mu(x)[a] \sum_{x' \in X} \mathbf{T}(x, a, x') \mathbf{V}^{\mu}(x') \right )
    \end{equation}
    %
    %
    
    %
    \begin{definition}
    \label{def:paretoDomPolicy}
               Given two proper polices $\mu$ and $\mu'$ for $\calM$, it is said that $\mu$ \emph{Pareto dominates} $\mu'$, 
               denoted $\mu > \mu'$, if for each $i \in \{1, \cdots, N\}$, $\mathbf{V}_i^{\mu} \geq \mathbf{V}_i^{\mu'}$, and for at least one $j \in \{1, \cdots, n\}$, $\mathbf{V}_j^{\mu} > \mathbf{V}_j^{\mu'}$.
    \end{definition}
    Intuitively, $\mu$ Pareto dominates $\mu'$ if, compared to $\mu'$, it increases the probability of reaching at least a set $Z_i$ without reducing the probability of reaching other sets $Z_j$'s.
    %
    
    %
    \begin{definition}
  \label{def:nonDomPolicy}
             A proper policy $\mu$ for the MOMDP in Definition~\ref{def:goalMDP} is \emph{Pareto optimal} if for no proper policy $\mu'$ for the MOMDP it holds that $\mu' > \mu$.
    \end{definition}
    In other words, a policy is Pareto optimal if it is not dominated by any policy.
   The \emph{Pareto front} is the set of all Pareto optimal policies. 
    %
     It is well-known that the set of memoryless policies suffices for achieving the Pareto front ~\cite{chatterjee2006markov}. Thus, we restrict to compute memoryless policies.
     %
     %
    
     %
    
    %
    

    With this in mind, we present the following result.
    \begin{theorem}
    \label{thm:pareto-weakstochastic}
        Let $\mu: X \rightarrow \dist(A)$ be a policy for $\calP$.
        Construct policy $\pi: S^* \rightarrow \dist(A)$ for the \ac{tlmdp} $M$ such that for each $\rho=s_0 s_1 \cdots s_n \in S^*$ it is set
        $\pi(\rho) = \mu(s_n, \trace(\rho))$.
        If $\mu$ is Pareto optimal, then $\pi$ is weakly-stochastic nondominated, respecting the preference specified by \ac{pdfa} $\pdfa$.
    \end{theorem}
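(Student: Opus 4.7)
The plan is to argue by contraposition. Assume $\pi$ is not weak-stochastic nondominated; I will construct a proper policy $\mu'$ on $\calP$ that Pareto-dominates $\mu$, contradicting the hypothesis that $\mu$ is Pareto optimal. The bridge between the two notions is an identification of each coordinate $\mathbf{V}_i^\mu$ with the trace probability $\Pr^\pi(\{w_i\}^\uparrow)$ for a representative word $w_i$ with $\delta(\init, w_i)\in F_i$.

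The first step is a bookkeeping fact about the product construction in Definition~\ref{def:prod}. By induction on path length, using item~\ref{itm:T} of that definition, along any sample path of $\calP$ the $Q$-component at step $k$ equals $\delta(\init, \trace(\rho_k))$, where $\rho_k$ is the projection of that prefix onto $S$. Since $\pi$ is defined from $\mu$ by $\pi(\rho)=\mu(s_n,\trace(\rho))$, the distribution over terminating paths in $M$ under $\pi$ and that over terminating paths in $\calP$ under $\mu$ are in one-to-one correspondence with equal masses. Consequently, for any $F_i\in\mathbb{F}$,
\begin{equation*}
\Pr{}^{\pi}\bigl(\{w\in\Sigma^\ast : \delta(\init,w)\in F_i\}\bigr) \;=\; \Pr{}^{\mu}\bigl(\text{reach } \{s_\bot\}\times F_i\bigr).
\end{equation*}
Summing over $\{W_i\}^\uparrow$ and invoking Lemma~\ref{lem:wUp} together with $Z_i=\bigcup_{W\in\{W_i\}^\uparrow}W$ yields the bridge identity $\mathbf{V}_i^\mu=\Pr^\pi(\{w_i\}^\uparrow)$.

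Because $\mathbb{F}$ partitions the finite set $Q$, the family $\{\{w\}^\uparrow : w\in\Sigma^\ast\}$ has at most $N=|\mathbb{F}|$ distinct elements, one per partition block. Therefore the condition $\pi' >_{\mathfrak{E}_{wk}} \pi$ from Definition~\ref{def:weak_dominating_policies} reduces to $\Pr^{\pi'}(\{w_i\}^\uparrow)\ge\Pr^{\pi}(\{w_i\}^\uparrow)$ for every $i\in\{1,\ldots,N\}$ with strict inequality for some $i$. Assuming such a dominating $\pi'$ exists, lift it to $\calP$: since the $Q$-component is a deterministic function of the $S$-history, the $S$-histories and $X$-histories are in bijection, so there is a proper (possibly finite-memory) policy $\bar\mu'$ on $\calP$ inducing the same path distribution as $\pi'$ on $M$. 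The bridge identity then gives $\mathbf{V}_i^{\bar\mu'}=\Pr^{\pi'}(\{w_i\}^\uparrow)$ for each $i$, so $\mathbf{V}^{\bar\mu'}$ Pareto-dominates $\mathbf{V}^\mu$. Appealing to the result cited as \cite{chatterjee2006markov} that memoryless policies attain the Pareto front for multi-objective reachability, I replace $\bar\mu'$ by a memoryless $\mu'$ with value vector componentwise at least $\mathbf{V}^{\bar\mu'}$; this $\mu'$ still Pareto-dominates $\mu$, contradicting Pareto optimality.

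The main obstacle is the lift from the (possibly history-dependent) $\pi'$ on $M$ to a $\calP$-policy with identical path distribution and the subsequent reduction to a memoryless dominator. The first is handled by an elementary induction exploiting the fact that the $Q$-component evolves deterministically from the $S$-trace; the second is a direct invocation of the memoryless-sufficiency result for multi-objective reachability MDPs already cited in the paper. All remaining verifications are routine algebraic manipulations of probabilities over terminating paths.
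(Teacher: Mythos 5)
Your proof is correct and follows essentially the same route as the paper's: you establish the identity $\mathbf{V}_i^\mu=\Pr^\pi(\{w_i\}^\uparrow)$ via the product construction and Lemma~\ref{lem:wUp}, then transfer Pareto non-dominance to weak-stochastic non-dominance, merely phrased as a contrapositive. The extra details you supply (the deterministic tracking of the automaton state along paths, the observation that $\{\{w\}^\uparrow\}$ has at most $N$ distinct members, and the explicit lift of $\pi'$ to a product policy) are points the paper glosses over, and they strengthen rather than change the argument.
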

    \begin{proof}
    
    We show that if $\mu$ is Pareto optimal then $\pi$ is weak-stochastic nondominated. To facilitate the proof, the following notation is used: Let $\Pr^\mu(\mbox{reach}(X), \calM)$ be the probability of terminating in the set $X$ given the policy $\mu$ for the MOMDP and $\Pr^\pi(\mbox{reach}(X), M)$ be the probability of terminating in the set $X$ given the policy $\pi$ in the original  \ac{tlmdp}. 
    
    

    First, consider that by the construction of the product MDP, Definition~\ref{def:prod}, preference graphs $\calG$ and $G$ are isomorphic, and thus, each $W_i \in \augnodes$ is mapped to a single $F_i \in \mathbb{F}$, and vice versa.
    Let's define $F_i^+ =\bigcup_{F, F_i \rightsquigarrow F} F$ for each $F_i \in \mathbb{F}$.
    Given that $\calG$ and $G$ are isomorphic, $W_i \rightsquigarrow W_j$ if and only if $F_i \rightsquigarrow F_j$ for all $i, j \in \{1, 2, \cdots, N \}$.
    This combined with that $Z_i = \bigcup_{W \in \{W_i\}^\uparrow} W$ for $i \in \{1, \cdots, N\}$ by Definition~\ref{def:goalMDP}, implies that for each $i$,
    \begin{equation}
    \label{eq:Zi_Fi}
       \mathbf{V}_i^\mu =  \Pr^\mu(\reach{Z_i}, \calM) = \Pr^\pi(\mbox{reach}(F_i^+) , M\}.
    \end{equation}
    
    %
    Next, for each $w, w' \in \Sigma^\ast$ such that $\delta(\init, w)=\delta(\init, w')$, it holds that $\{w\}^\uparrow = \{w'\}^\uparrow$.
    Given this and Lemma~\ref{lem:wUp}, for each $F_i$ and $w \in \Sigma^\ast$ such that $\delta(\init, w) \in F_i$, 
    \begin{equation}
    \label{eq:Fi_W}
        \Pr^\pi(\mbox{reach}(F_i^+), M) =\Pr^\pi(\{w\}^\uparrow ).
    \end{equation}
    
    Finally, given that $\mu$ is a Pareto optimal policy, by Definition~\ref{def:paretoDomPolicy} and Definition~\ref{def:nonDomPolicy}, it means there exists no policy $\mu'$ such that 
     $\mathbf{V}_i^{\mu'} \geq \mathbf{V}_i^\mu$ for all integers $1 \leq i \leq n$ and 
    $\mathbf{V}_j^{\mu'} > \mathbf{V}_j^\mu$ for some integer $1 \leq j \leq n$.
    This, by (\ref{eq:Zi_Fi}) and (\ref{eq:Fi_W}) and that the set of randomized, memoryless policies suffices for the Pareto front of $\calM$, means there exists no policy $\pi' \in \Pi^M_{prop}$ such that $\Pr^{\pi'}(\{w\}^\uparrow ) \geq \Pr^{\pi}(\{w\}^\uparrow )$ for every $w \in \Sigma^*$ and  $\Pr^{\pi'}(\{w'\}^\uparrow ) > \Pr^{\pi}(\{w'\}^\uparrow )$ for some $w' \in  \Sigma^*$.
    This, by Definition~\ref{def:weak_dominating_policies} and Definition~\ref{def:sto_dominance}, means that $\pi$ is weak-stochastic nondominated.
    \end{proof}

    Now one can use any existing methods to compute a set of Pareto optimal policies for $\calP$. 
    For a survey of those methods, see~\cite{roijers2013survey}.
    Note that computing the set of all Pareto optimal policies is generally infeasible, and thus, one needs to compute only a subset of them or to approximate them.

    \section{Case Study: Garden}
    
    %
    %
    
    In this section, we present the results from the planning algorithm for the running example in Figure~\ref{fig:gap_garden} .
    %
    
    %
    %
    %
In the garden, the actions of the robot are $N$, $S$, $E$, $W$---for receptively moving to the cell in the North, South, East, and West side of the current cell---and $T$ for staying in the current cell. The bee robot initially has a full charge, and using that charge it can fly only $12$ time steps.

\noindent \textbf{Uncertain environment: } A bird roams about  the south east part of the garden, colored yellow in the figure.
    When the bird and the bee are within the same cell, the bee needs to stop flying and hide in its current location until the bird goes away. The motion of the bird is given by a Markov chain. 
    %
    %
    %
    Besides the stochastic movement of the bird, 
the weather is also stochastic and affects the robot's planning.     
    The robot cannot pollinate a flower while raining.
    We assume when the robot starts its task, at the leftmost cell at the bottom row, it is not raining and the probability that it will rain in the next step is $0.2$. 
    This probability increases for the consecutive steps each time by $0.2$ until the rain starts.
    %
    %
    Once the rain started, the probability for the rain to stop in the five following time steps will respectively be $0.2$, $0.4$, $0.6$, $0.8$, and $1.0$, assuming the rain has not already stopped at any of those time steps.

	We implemented this case study in Python and considered two variants of it, one without stochasticity  in the robot's dynamics, and one with stochasticity. 
	In the former case, when the robot decides to perform an action to move to a neighboring cell, its actuators will guarantee with full certainty that the robot will move to that cell after performing the action.
	In the later case, the probability that the robot reaches the intended cell is $0.7$, and for each of the unintended directions except the opposite direction, the probably that the robot's actuators move the robot to that unintended direction is $0.1$. If the robot hits the boundary, it stays in its current cell.
	
	All the experiments were performed on a Windows 11 installed on a device with a core i$7$, 2.80GHz CPU and a 16GB memory. 
	\subsection{Deterministic Robot in the Uncertain Environment}
	%
	%
	The MDP for this case has $10,460$ states and $280,643$ transitions (its transition function has $280,643$ entries with non-zero probabilities).
	It took $47.38$ seconds for our program to construct the MDP.
	The product MDP had $36,649$ states and $946,467$ transitions.
	The construction time for the product MDP was $408.87$ seconds.
	%
	
	%

	 Given the preference described in Fig.~\ref{fig:pdfa}, we employ linear scalarization methods to solve the \ac{momdp}.  

Specifically,    given a    weight vector $\mathbf{w}=[w_1, w_2, w_3, w_4]$, we compute the weak-stochastic nondominated policy $\mu_{\mathbf{w}}$, by first setting $V_{\mathbf{w}}(x) = \sum_{i, x \in Z_i} w_i$ for each goal state $x \in X_G$, and then by solving the following Bellman equation for the values of  the non-goal states
    \begin{equation}
        V_{\mathbf{w}}(x) = \max_{a \in A_x} \sum_{x' \in X} \mathbf{T}(x, a, x')V_{\mathbf{w}}(x'), \forall x \in X \setminus X_G.
    \end{equation}

    The policy is recovered from $ V_{\mathbf{w}}(x)$ as
    \begin{equation}
        \mu_{\mathbf{w}}(x) = \argmax_{a \in A_x} \sum_{x' \in X} \mathbf{T}(x, a, x')V_{\mathbf{w}}(x'), \forall x\in X \setminus X_G.
    \end{equation}
	
 	We randomly generated $100$ weight vectors and used each one of them to compute a Pareto optimal policy for the MOMDP.  The computed Pareto-optimal policies in the \ac{momdp} yield  a  set of $100$ weak-stochastic nondominated policies.
	%
	%
   From the result, it is noted that none of those computed polices were weak-stochastic dominated by the other polices. This is expected due to Theorem~\ref{thm:pareto-weakstochastic}.
    %
    %
    %
    %
    Table~\ref{tbl:nondom_policies} shows $10$ out of those $100$ weight vectors along with the value vectors of the polices computed for those weight vectors and the corresponding probabilities those polices assign to the four preferences $p_1$ through $p_4$.

        	    \begin{table}[h!]
	    \setlength{\tabcolsep}{0.2em}
    \centering
    \begin{tabular}{cccc}
    \hline \hline
     & \scriptsize     Weight Vector & \scriptsize  Value Vector & \scriptsize  Prob. of individual outcomes \\ 
    \hline
   \scriptsize 1 &  \scriptsize [0.50, 0.17, 0.21, 0.12] & \scriptsize 	[0.24, 0.25, 0.98, 1.0] & \scriptsize [0.24, 0.01, 0.74, 0.01] \\
   \scriptsize 2 &  \scriptsize [0.08, 0.46, 0.38, 0.08]	 & \scriptsize [0.24, 0.42, 0.80, 1.0]		 & \scriptsize [0.24, 0.18, 0.56, 0.02] \\
    \scriptsize 3 & \scriptsize [0.73, 0.13, 0.13, 0.01]	 & \scriptsize [0.24, 0.32, 0.91, 1.0]		& \scriptsize [0.24, 0.08, 0.67, 0.01] \\
    \scriptsize 4 & \scriptsize [0.67, 0.24, 0.02, 0.07]	 & \scriptsize [0.19, 0.63, 0.51, 1.0]	& \scriptsize [0.19, 0.44, 0.32, 0.05] \\
    \scriptsize 5 & \scriptsize [0.16, 0.11, 0.04, 0.69]		 & \scriptsize [0.15, 0.71, 0.42, 1.0]			& \scriptsize [0.15, 0.56, 0.27, 0.02] \\
    \scriptsize 6 & \scriptsize [0.26, 0.16, 0.03, 0.55]		 & \scriptsize [0.15, 0.72, 0.40, 1.0]			& \scriptsize [0.15, 0.57, 0.25, 0.03] \\
    \scriptsize 7 & \scriptsize [0.24, 0.46 0.26, 0.04]				 & \scriptsize [0.17, 0.64, 0.53, 1.0]					& \scriptsize [0.17, 0.47, 0.36, 0.00] \\
    \scriptsize 8 & \scriptsize [0.22, 0.28, 0.13, 0.37]				 & \scriptsize [0.15, 0.73, 0.40, 1.0]						& \scriptsize [0.15, 0.58, 0.25, 0.02] \\
    \scriptsize 9 & \scriptsize [0.07, 0.65, 0.04, 0.25]		 & \scriptsize 	[0.00, 1.00, 0.00, 1.0]		& \scriptsize [0.00, 1.00, 0.00, 0.00] \\
    \scriptsize 10 & \scriptsize [0.18, 0.08, 0.01, 0.73]			 & \scriptsize 	[0.18, 0.63, 0.51, 1.0]			& \scriptsize [0.18, 0.45, 0.33, 0.04] \\
    \end{tabular}
        \caption{Ten weak-stochastic nondominant polices computed by our algorithm for the Garden case study. 
        %
    }
    \label{tbl:nondom_policies}
    \end{table}

    For each policy, the last column shows probability vector indicating the probability distribution over individual outcomes $p_1,\ldots, p_4$ (in this order) given the computed policy. 
    The third column shows the multi-objective value vector of each computed policy. It is noted that
    none of those value vectors dominates any other value vector.

    Rows $1$ and $3$ of this table show that even if the weight assigned to the most preferred outcome, $p_1$, is significantly higher than the weights assigned to the other preferences, the probability that $p_1$ to be satisfied is still less than $0.25$. 
    This is justified by the fact that the robot's battery capacity supports the robot for only $12$ time steps and thus to achieve $p_1$, the robot must not be stopped by the bird nor there should be raining when it reaches a cell to do pollination. The probability to satisfy these conditions given the environment dynamics is less than $0.25$.
    The probability of $p_4$ to be satisfied in any entry of this table is less than $0.05$. This is because $p_4$ has the lowest priority, and any policy would prefer to satisfy other preferences who are assigned higher priorities. 
    Although the objectives $\{p_1, p_2\}$ and $\{p_1, p_3\}$ in the first and the third rows are treated almost equally by the weight vector in terms of importance, the probability that the later to be satisfied is significantly bigger than the probability of the former to be satisfied. This is because the objective $\{p_1 \}$ contains the preference with the highest priority and that those two rows assign a very high weight to this objective, forcing the policy to try to satisfy $p_1$. Further, by attempting to perform $p_1$, the robot has the chance to accomplish $p_3$ within the same attempt, albeit if it fails to accomplish $p_1$. 
    %
    More precisely, if in attempting to perform the task $p_1$---first tulips and then at least one out of daisies and orchids---the robot succeeds to pollinate the tulips but fails to pollinate the daisies and orchids, then it has already accomplished $p_3$, even though it has failed in accomplishing what it was aiming for---$p_1$.
    %
    
    \subsection{Stochasticity in the Robot's Actions}
    %
	%
	The MDP for this variant has the same number of states, $10,460$, but it has more transitions, $779,396$, which is due to  the stochasticity in   robot's dynamics.
	The MDP construction time for this case was $279.85$ seconds and it took $2,129.02$ seconds to make the product MDP. We again computed $100$ weak-stochastic nondominated policies. 
	%
	%
	%
	
Due to the stochasticity in the robot's dynamic, we expect the policy computed for a specific weight vector
	to be less ``attractive'' than a policy computed for the same weight vector of the previous variant.
	We compare those two polices for the weight vector $[0.25, 0.25, 0.25, 0.25]$.
	The probabilities of the preferences to be satisfied for the variant without stochasticity were $[p_1: 0.24, p_2: 0.05, p_3:0.70, p_4: 0.01]$, while those probabilities for the variant with stochasticity were $[p_1: 0.01, p_2: 0.82, p_3:0.12, p_4: 0.05]$.
	%
	%
	While the former policy yields a higher probability of achieving $p_3$, the latter policy puts most of its efforts to satisfy $p_2$.

	\section{Conclusions and Future Work}
    In this paper, we proposed a finite automaton for specifying user preferences over temporal goals,
    formulated and solved a preference-based temporal planning in stochastic systems.
    The characteristic that distinguishes our work from prior work on temporal logic planning is that our formulation considers the case where the user preferences may have incomparable outcomes, and this, introduces a problem in defining how to compare policies given their distributions over outcomes as well as a problem in designing planning algorithm to solve a preferred policy. 
    We use the notion of weak-stochastic ordering to rank different policies.  
    Future work may consider other stochastic ordering that are used for ranking probability measures given a partial order over outcomes.
    %
    %
    %
    %
    %
    Another direction will be to extend this work to the planning with preference over temporal goals that are satisfied in infinite time, for instance, recurrent properties and other more general properties in temporal logic. For practical robotic applications, it would be interesting to design an interface that translates  human language preference specifications or human supervisors' feedback to a computational model, such as a preference automaton or its variant, to facilitate human-on-the-loop planning.
    
    %


	\bibliographystyle{plain}
	\bibliography{sample}
	
\end{document}